\pgfplotsset{compat = newest}
\newcommand{\relu}{\texttt{ReLU}}
\newcommand{\maxpool}{\texttt{MaxPool}}
\newcommand{\minpool}{\texttt{MinPool}}
\newcommand{\lrelu}{\texttt{LeakyReLU}}
\newcommand{\prelu}{\texttt{PReLU}}
\newcommand{\realset}{\mathds{R}}
\newcommand{\lattice}{{\cal L}}
\newcommand{\fun}{\mathcal{F}}
\begin{document}


\title{MorphoActivation: Generalizing ReLU activation function by mathematical morphology}
\titlerunning{MorphoActivation}

\author{Santiago Velasco-Forero \and Jes\'us Angulo}
\authorrunning{Santiago Velasco-Forero \and Jes\'us Angulo }   
%
\tocauthor{Santiago Velasco-Forero , Jes\'us Angulo}
%
\institute{MINES  Paris,  PSL University,\\
Centre for mathematical morphology (CMM), France\\
\email{santiago.velasco@mines-paristech.fr, jesus.angulo@mines-paristech.fr}}

\maketitle              

\begin{abstract}
This paper analyses both nonlinear activation functions and spatial max-pooling for Deep Convolutional Neural Networks (DCNNs) by means of the algebraic basis of mathematical morphology. Additionally, a general family of activation functions is proposed by considering both max-pooling and nonlinear operators in the context of  morphological representations. Experimental section validates the goodness of our approach on classical benchmarks for supervised learning by DCNN.
\keywords{Matheron's representation theory; activation function; mathematical morphology; deep learning.}
\end{abstract}

\section{Introduction}
\label{sec:Intro}
Artificial neural networks were introduced as mathematical models for biological neural networks \cite{mcculloch1943logical}. The basic component is a \emph{linear perceptron} which is a linear combination of weights with biases followed by a nonlinear function called \emph{activation function}. Such components (usually called a \emph{layer}) can then be concatenated eventually leading to very complex functions named \emph{deep artificial neural networks (DNNs)} \cite{goodfellow2016deep}. Activation function can also be seen as an attached function between two layers in a neural network. Meanwhile, in order to get the learning in a DNNs, one needs to update the weights and biases of the neurons on the basis of the error at the output. This process involves two steps, a \emph{Back-Propagation} from prediction error and a Gradient Descent Optimization to update parameters \cite{goodfellow2016deep}. The most famous activation function is the Rectified Linear Unit (ReLU) proposed by \cite{nair2010rectified}, which is simply defined as  $\relu(x)=\max(x,0)$. A clear benefit of ReLU is that both the function itself and its derivatives are easy to implement and computationally inexpensive. However,  ReLU has a potential loss during optimization because the gradient is zero when the unit is not active. This could lead to cases where there is a gradient-based optimization algorithm that will not adjust the weights of a unit that was never initially activated. An approach purely computational motivated to alleviate potential problems caused by the hard zero activation of ReLU, proposed a leaky ReLU activation \cite{maas2013rectifier}: $\lrelu(x)=\max(x,.01x)$. A simple generalisation is the 
Parametric ReLU proposed by \cite{he2015delving}, defined as $\prelu_{\beta}(x)=\max(x,\beta x)$, where $\beta \in \realset$ is a learnable parameter. In general, the use of \emph{piecewise-linear functions} as activation function has been initially motivated by neurobiological observations; for instance, the inhibiting effect of the activity of a visual-receptor unit on the activity of the neighbouring units can be modelled by a line with two segments \cite{hartline1957inhibitory}. 
On the other hand, for the particular case of structured data as images, a translation invariant DNN called Deep Convolutional Neural Networks (DCNN) is the most used architecture. In the conventional DCNN framework interspersed convolutional layers and pooling layers to summarise information in a hierarchical structure. The common choice is the pooling by a maximum operator called \emph{max-pooling}, which is particularly well suited to the separation of features that are very sparse \cite{boureau2010theoretical}. 

As far as these authors know, that morphological operators have been used in the context of DCNNs following the paradigm of replacing lineal convolutions by non-linear morphological filters \cite{ritter1996introduction,LEMD2022,islam2020extending,mondal2020image,Hermary2022}, or hybrid variants between linear and morphological layers \cite{pessoa2000neural,sussner2020extreme,hernandez2020hybrid,valle2020reduced}. Our contribution is more in the sense of \cite{franchi2020deep} where the authors show favourable results in quantitative performance for some applications when seeing the max-pooling operator as a dilation layer. However, we go further to study both nonlinear activation and max-pooling operators in the context of morphological representation theory of nonlinear operators. Finally, in the experimental section, we compare different propositions in a practical case of training a multilayer CNNs for classification of images in several databases.


\section{ReLU activation and max-pooling are morphological dilations}
\label{sec:Motivation}

\subsection{Dilation and Erosion}

Let us consider a complete lattice $(\lattice,\leq)$, where and $\bigvee$ and $\bigwedge$ are respectively its supremum and infimum. A lattice operator $\psi: \lattice \to \lattice  $ is called \emph{increasing} operator (or isotone) is if it is order-preserving, i.e., $\forall X, Y$, $X \leq Y \implies \psi(X) \leq \psi(Y)$. Dilation $\delta$ and erosion $\varepsilon$ are lattice operators which are increasing and satisfy
\begin{eqnarray}\nonumber
\delta\left( \bigvee_{i\in J} X_i\right) = \bigvee_{i\in J} \delta\left(  X_i\right); & \:\:\:\:  & \varepsilon\left( \bigwedge_{i\in J} X_i\right) =  \bigwedge_{i\in J} \varepsilon\left(  X_i\right).
\end{eqnarray}
Dilation and erosion can be then composed to obtain other operators \cite{heijmans1991theoretical}.
In this paper, we also use morphological operators on the lattice of functions $\fun(\realset^n,\bar{\realset})$ with the standard partial order $\leq$. The sup-convolution and inf-convolution of function $f$ by structuring function $g$ are given by
\begin{eqnarray}\label{DilationFunctions}
(f \oplus g)(x) = \delta_{g}(f)(x) & := & \sup_{y\in \realset^{n}}\left\{f(x-y) + g(y)\right\}, \\ \label{ErosionFunctions}
(f \ominus g)(x) = \varepsilon_{g}(f)(x) & := & \inf_{y\in \realset^{n}}\left\{f(x+y) - g(y)\right\}.
\end{eqnarray}

\subsection{ReLU and max-pooling}

Let us now consider the standard framework of one-dimensional\footnote{The extension to $d$-dimensional functions is straightforward.} signals on DCNNs where any operator is applied on signals $f(x)$ supported on a discrete grid subset of $\mathbb{Z}$. The \emph{ReLU} activation function \cite{nair2010rectified} applied on every pixel $x$ of an image $f$ is defined as 
\begin{equation}
\relu(f(x))  := \max\left(0,f(x)\right). 
\end{equation}


The \emph{Max-pooling operator} of \emph{pooling size}  $R$ and strides $K$, maps an image $f$ of $n$ pixels onto an image of $n':=\lfloor \frac{n-R}{K} + 1 \rfloor$ by taking the local maxima in a neighbour of size $R$, and moving the window $K$ elements at a time, skipping the intermediate locations:
\begin{equation}
\maxpool_{R}(f)(x) = \delta^{\maxpool}_{R}(f)(x) := \bigvee_{y\in W_{R}(x)}\{ f(K \cdot x - y) \} .
\end{equation}
where $W_R(y)=0$ if $y$ belongs to the neighbour of size $R$ centred in $x$ and $-\infty$ otherwise.
There are other operations in DCNN which use the maximum operation as main ingredient, namely the \emph{Maxout layer}~\cite{goodfellow2013maxout} and the \emph{Max-plus layer (morphological perceptron)}~\cite{charisopoulos2017morphological,zhang2019max}.

From the definition of operators, it is straightforward to prove the following proposition
\begin{proposition}
ReLU activation function and max-pooling are dilation operators on the lattice of functions.
\end{proposition}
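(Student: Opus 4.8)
The plan is to verify directly the defining property of a dilation recalled in the excerpt, namely commutation with arbitrary suprema, exploiting the fact that on the function lattice $\fun(\realset^n,\bar{\realset})$ the join of a family $\{f_i\}_{i\in J}$ is computed pointwise, $(\bigvee_{i\in J} f_i)(x)=\bigvee_{i\in J} f_i(x)$. Both operators in the statement are built exclusively out of suprema of real values, so the entire argument reduces to the associativity and commutativity of the supremum in the complete lattice $\bar{\realset}$. I will establish the sup-distributivity for each operator separately; increasingness then comes for free, since commuting with $\bigvee$ already forces $X\le Y \Rightarrow \delta(Y)=\delta(X\vee Y)=\delta(X)\vee\delta(Y)\ge\delta(X)$.

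For ReLU the key observation is that $\max(0,t)=\bigvee\{0,t\}$, i.e. the rectifier is itself a binary supremum against the constant $0$. Applying it pointwise to a family and invoking associativity of the join gives
\begin{equation}\nonumber
\relu\!\left(\bigvee_{i\in J} f_i\right)(x)=\bigvee\Bigl\{0,\ \bigvee_{i\in J} f_i(x)\Bigr\}=\bigvee_{i\in J}\bigvee\{0,f_i(x)\}=\bigvee_{i\in J}\relu(f_i)(x),
\end{equation}
which is exactly the dilation identity. The only equality requiring care is the middle one, and it is precisely associativity of the supremum: adjoining the single element $0$ to the family yields the same value whether $0$ is combined before or after taking the join over $i$.

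For max-pooling one notes that $\maxpool_{R}(f)(x)=\bigvee_{y\in W_{R}(x)} f(K\cdot x-y)$ is again a supremum, this time over the spatial window $W_{R}(x)$, and that this spatial index is independent of the lattice index $i$. Hence I would simply interchange the two suprema,
\begin{equation}\nonumber
\maxpool_{R}\!\left(\bigvee_{i\in J} f_i\right)(x)=\bigvee_{y\in W_{R}(x)}\bigvee_{i\in J} f_i(K\cdot x-y)=\bigvee_{i\in J}\bigvee_{y\in W_{R}(x)} f_i(K\cdot x-y)=\bigvee_{i\in J}\maxpool_{R}(f_i)(x),
\end{equation}
the interchange being legitimate by commutativity of the supremum.

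There is no genuine obstacle here: the statement is in essence a restatement of the slogan that a pointwise supremum commutes with suprema. The only subtleties worth flagging, and the places where I would be careful, are \emph{(i)} to record explicitly that the lattice join is the pointwise one, so that the scalar identities above transfer verbatim to functions, and \emph{(ii)} to observe that the index set $J$ may be infinite, which is harmless because $\bar{\realset}$ is a complete lattice and all the suprema involved therefore exist, so swapping two suprema or adjoining the constant $0$ never raises an existence or convergence issue. One could equivalently present both maps as sup-convolutions in the sense of \eqref{DilationFunctions} — ReLU as dilation by a structuring function concentrated at the origin and max-pooling by the flat structuring function $W_{R}$ — but the direct verification above is the shortest route to the claim.
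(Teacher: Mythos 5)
Your proposal is correct and follows essentially the same route as the paper, which likewise verifies that both operators are increasing and commute with the supremum; you simply fill in the details the paper leaves implicit (pointwise joins, associativity/commutativity of $\bigvee$) and note that increasingness follows from sup-distributivity rather than checking it separately. No gap.
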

\begin{proof}
Using the standard partial ordering $\leq$, we note that both ReLU  and max-pooling are increasing:
		\begin{equation*}
		f \leq g \implies \relu(f) \leq \relu(g); \:\:\: \delta^{\maxpool}_{R}(f) \leq \delta^{\maxpool}_{R}(g).
		\end{equation*}
They commute with supremum operation
		\begin{equation*}
		{\relu}(f \vee g) = {\relu}(f ) \vee {\relu}(g) ; \:\:\: \delta^{\maxpool}_{R}(f \vee g) = \delta^{\maxpool}_{R}(f ) \vee \delta^{\maxpool}_{R}(g).
		\end{equation*}
\end{proof}
These two operators are both also \emph{extensive}, i.e., $f \leq \delta(f)$. $\relu$ is also idempotent, i.e., $\relu (\relu(f))=\relu(f)$. Then $\relu$ is both a dilation and a closing.

\textbf{Remark 1: Factoring activation function and pooling.} The composition of dilations in the same complete lattice can often be factorized into a single operation. One can for instance define a nonlinear activation function and pooling dilation as
\begin{equation*}
\delta^{\text{ActPool}}_{R;\alpha}(f)(x) :=  \bigvee_{y\in W_{R}(x)}\left\{ \max\left(0, f(K\cdot x - y) + \alpha \right) \right\},
\end{equation*}
where $W$ denotes a local neighbour, usually a square of side $R$. Note that that analysis does not bring any new operator, just the interpretation of composed nonlinearities as a dilation.

\bigskip 
\textbf{Remark 2: Positive and negative activation function, symmetric pooling.} More general ReLU-like activation functions also keep a negative part. Let us consider the two parameters $\beta^{+},\beta^{-} \in \realset$, we define \emph{$(\beta^{+},\beta^{-})$-ReLU} as
\begin{equation*}
{\relu}_{\beta^{+},\beta^{-}}(f)(x) := 
\left\{
\begin{array}{ll}
\beta^{+}f(x) & \text{if}\:\:\: f(x) >0 \\
\beta^{-}f(x) & \text{if}\:\:\: f(x) \leq 0
\end{array}
\right.
\end{equation*}
In the case when $\beta^{+}\geq \beta^{-}$, one has
\begin{equation}\label{eq:PosNegRel}
{\relu}_{\beta^{+},\beta^{-}}(f)(x)  = \max\left(\beta^{-}f(x),\beta^{+}f(x)\right).
\end{equation}
Note that the Leaky ReLU~\cite{maas2013rectifier} corresponds to $\beta^{+}=1$ and $\beta^{-}=0.01$. The Parametric ReLU~\cite{he2015delving} takes $\beta^{+}=1$ and $\beta^{-}=\theta$ learned along with the other neural-network parameters. 
More recently~\cite{Ma2021} both  $\beta^{+}$ and $\beta^{-}$  
are learned in the \emph{ACtivateOrNot} (ACON) activation function, where a softmax is used to approximate the maximum operator.

Usually in CNNs, the \emph{max-pooling} operator is used after activation, i.e., 
$\delta^{\text{MaxPool}}_{R}( {\relu}_{\beta^{+},\beta^{-}}(f))$, 
which is spatially enlarging the positive activation and removing the negative activation. It does not seem coherent with the goal of using the pooling to increase spatial equivariance and hierarchical representation of information. It is easy to \emph{"fix"} that issue by using a more symmetric pooling based on taking the positive and negative parts of a function. Given a function $f$, it can be expressed in terms of its positive $f^{+}$ and negative parts $f^{-}$, i.e.,  $f = f^{+} - f^{-}$, with $f^{+}(x) = \max(0,f(x))$ and $f^{-}(x) = \max(0,-f(x))$,
where both $f^{+}$ and $f^{-}$ are non-negative functions. We can now define a \emph{positive and negative max-pooling}. The principle is just to take a max-pooling to each part and recompose, i.e., 
\begin{eqnarray} \label{eq:SelfDualRelu}
\delta^{\maxpool^{+}_{-}}_{R}(f)(x)  &  =   &  \delta^{\maxpool}_{R}(f^{+})(x) - \delta^{\maxpool}_{R}(f^{-})(x)  \\ 
& = & \delta^{\maxpool}_{R}(\max(0,f))(x) + \varepsilon^{\minpool}_{R}(\min(0,f))(x). \nonumber
\end{eqnarray}
We note that \eqref{eq:SelfDualRelu} is \emph{self-dual} and related to the dilation on an inf-semilattice~\cite{Keshet00}. However, in the general case of \eqref{eq:SelfDualRelu} by learning both $\beta^{-},\beta^{+}$,
\begin{equation}\label{eq:SelfDualPosNegPar}
\delta^{\maxpool^{+}_{-}}_{\beta^{+},\beta^{-},R}(f)(x) =\delta^{\maxpool}_{R}(\max(0,\beta^{-}f))(x) + \varepsilon^{\minpool}_{R}(\min(0,\beta^{+}f))(x)
\end{equation}
is not always self-dual.




\section{Algebraic theory of minimal representation for nonlinear operators and functions}
\label{sec:Theory}

In the following section, we present the main results about representation theory of nonlinear operators from Matheron \cite{matheron1974random}, Maragos \cite{Maragos89} and Bannon-Barrera \cite{banon1991minimal} (MMBB).

\subsection{MMBB representation theory on nonlinear operators}

Let us consider a translation-invariant (TI) increasing operator $\Psi$. The domain of the functions considered here is either $E=\realset^n$ or $E=\mathbb{Z}^n$, with the additional condition that we consider only closed subsets of $E$. We consider first the set operator case applied on ${\cal P}(E)$ and functions $f: E \to \realset\cap \infty$. 

\bigskip
\noindent
\textbf{Kernel and basis representation of TI increasing set operators.} The kernel of the TI operator $\Psi$ is defined as the following collection of input sets~\cite{matheron1974random}:
$\text{Ker}(\Psi) =$ $\left\{A \subseteq E \: : \: \mathbf{0}\in \Psi(A) \right\}$,
where $\mathbf{0}$ denotes the origin of $E$. 

\begin{theorem}[Matheron (1975)~\cite{matheron1974random}]
	Consider set operators on ${\cal P}(E)$. Let $\Psi : {\cal P}(E) \to {\cal P}(E) $ be a TI increasing set operator. Then
	\begin{equation*}
	\Psi(X) = \bigcup_{A\in \text{Ker}(\Psi)} X\ominus A  = \bigcap_{B\in \text{Ker}(\bar{\Psi})} X\oplus \check{B} .
	\end{equation*}	
	where the dual set operator is $\bar{\Psi}(X) = $ $[\Psi(X^c)]^c$ and $\check{B}$ is the transpose structuring element.  
\end{theorem}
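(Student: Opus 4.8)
The plan is to establish the erosion (union) representation directly from translation invariance and monotonicity, and then to obtain the dilation (intersection) representation purely by duality, so that only the first half requires genuine work. Throughout I use the set-theoretic forms $X\ominus A=\{h : A+h\subseteq X\}$ and $X\oplus B=\bigcup_{b\in B}(X+b)$, and I write translation invariance as $\Psi(X+h)=\Psi(X)+h$.

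First I would record the bridging identity linking the kernel to membership in $\Psi(X)$. Writing $X-h$ for the translate of $X$ by $-h$, translation invariance gives $\Psi(X-h)=\Psi(X)-h$, whence
\[
h\in\Psi(X)\iff \mathbf{0}\in\Psi(X-h)\iff X-h\in\text{Ker}(\Psi).
\]
This equivalence is the engine of the whole argument. For the inclusion $\bigcup_{A\in\text{Ker}(\Psi)}(X\ominus A)\subseteq\Psi(X)$, I would fix $A\in\text{Ker}(\Psi)$ and $h\in X\ominus A$. By definition of erosion $A+h\subseteq X$, i.e. $A\subseteq X-h$; monotonicity then yields $\Psi(A)\subseteq\Psi(X-h)=\Psi(X)-h$, and since $\mathbf{0}\in\Psi(A)$ we conclude $h\in\Psi(X)$. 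For the reverse inclusion I would exploit the freedom to choose the witness kernel element: given $h\in\Psi(X)$, set $A:=X-h$, which lies in $\text{Ker}(\Psi)$ by the bridging identity; then $A+h=X\subseteq X$ shows $h\in X\ominus A$, so $h$ belongs to the union. Combining the two inclusions proves the first equality.

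Finally, I would deduce the intersection formula by applying the first equality to the dual operator. One checks routinely that $\bar{\Psi}(X)=[\Psi(X^c)]^c$ is again TI and increasing, both facts following because complementation reverses inclusions and commutes with translation, and that $\Psi=\bar{\bar{\Psi}}$. Hence $\bar{\Psi}(X)=\bigcup_{B\in\text{Ker}(\bar{\Psi})}(X\ominus B)$. Evaluating this at $X^c$ and taking complements, together with the de Morgan erosion–dilation duality $(X^c\ominus B)^c=X\oplus\check{B}$, converts the union of erosions into the claimed intersection of dilations $\bigcap_{B\in\text{Ker}(\bar{\Psi})}(X\oplus\check{B})$.

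I expect the only genuinely delicate step to be the reverse inclusion in the erosion representation, whose whole content is the clever choice $A=X-h$ that produces a kernel element witnessing $h$; everything else is bookkeeping with the definitions of $\ominus$, $\oplus$, the two structural hypotheses on $\Psi$, and the complementation duality.
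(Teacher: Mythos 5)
The paper does not actually prove this theorem---it is quoted as a classical result with a citation to \cite{matheron1974random}---so there is no in-paper argument to compare against. Your proof is the standard and correct one for Matheron's kernel representation: the bridging equivalence $h\in\Psi(X)\iff X-h\in\text{Ker}(\Psi)$ from translation invariance, monotonicity for the inclusion $\bigcup_{A}(X\ominus A)\subseteq\Psi(X)$, the witness $A=X-h$ for the reverse inclusion, and the passage to the intersection-of-dilations form by applying the union formula to the (still TI and increasing) dual operator together with $(X^{c}\ominus B)^{c}=X\oplus\check{B}$; I see no gaps.
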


The kernel of $\Psi$ is a partially ordered set under set inclusion which has an infinity number of elements. In practice, by the property of absorption of erosion, that means that the erosion by $B$ contains the erosions by any other kernel set larger than $B$ and it is the only one required when taking the supremum of erosions. The morphological basis of $\Psi$ is defined as the minimal kernel sets~\cite{Maragos89}:
\begin{equation*}
\text{Bas}(\Psi) = \left\{M \in \text{Ker}(\Psi) \: : \: [ A  \in \text{Ker}(\Psi)\:\: \text{and} \:\: A\subseteq M] \implies A=M \right\}.
\end{equation*}

A sufficient condition for the existence of $\text{Bas}(\Psi)$ is  for $\Psi$ to be an upper semi-continuous operator. We also consider closed sets on ${\cal P}(E)$.

\begin{theorem}[Maragos (1989)~\cite{Maragos89}]
	Let $\Psi : {\cal P}(E) \to {\cal P}(E)$ be a TI, increasing and upper semi-continuous set operator\footnote{Upper semi-continuity meant with respect to the hit-miss topology. Let $(X_n)$ be any decreasing sequence of sets that converges monotonically to a limit set $X$,\emph{i.e.},$X_{n+1}\subseteq X_{n} \forall n$ and $X =\cap_{n}X_n$; that is denoted  by $X_{n} \downarrow X$.\\
	An increasing set operator $\Phi$ on $\fun(E)$ is upper semi-continuous if and only if $X_{n} \downarrow X$ implies that $\Phi(X_n) \downarrow \Phi(X)$.
	} . Then
	\begin{equation*}
	\Psi(X) = \bigcup_{M\in \text{Bas}(\Psi)} X\ominus M  = \bigcap_{N\in \text{Bas}(\bar{\Psi})} X\oplus \check{N} .
	\end{equation*}	
\end{theorem}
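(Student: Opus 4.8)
The plan is to derive the basis representation from Matheron's kernel representation (the preceding theorem) by discarding the redundant kernel sets, the redundancy being controlled by the absorption property of erosion. Throughout I would use that the set erosion $X\ominus A=\{y\in E: A_y\subseteq X\}$ is antitone in its structuring element: if $M\subseteq A$, then $A_y\subseteq X$ forces $M_y\subseteq X$, so $X\ominus A\subseteq X\ominus M$. Consequently a kernel set contributes nothing new to the supremum of erosions as soon as it dominates a smaller kernel set. One inclusion is then immediate, since $\text{Bas}(\Psi)\subseteq\text{Ker}(\Psi)$ by definition:
\begin{equation*}
\bigcup_{M\in\text{Bas}(\Psi)} X\ominus M \subseteq \bigcup_{A\in\text{Ker}(\Psi)} X\ominus A = \Psi(X),
\end{equation*}
the last equality being exactly Matheron's theorem.

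For the reverse inclusion the whole point is to show that the basis is \emph{sufficient}, i.e. that every $A\in\text{Ker}(\Psi)$ contains some basis element $M\subseteq A$. Granting this, absorption would give $X\ominus A\subseteq X\ominus M\subseteq\bigcup_{M'\in\text{Bas}(\Psi)} X\ominus M'$ for every kernel set $A$, whence $\Psi(X)=\bigcup_{A\in\text{Ker}(\Psi)} X\ominus A\subseteq\bigcup_{M\in\text{Bas}(\Psi)} X\ominus M$, and the two inclusions would combine to the first identity.

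The existence of a minimal kernel set below each $A$ is where upper semi-continuity must enter, and I expect this to be the main obstacle. I would fix $A$, consider the poset $\{B\in\text{Ker}(\Psi): B\subseteq A\}$ ordered by inclusion, and try to produce a minimal element by Zorn's lemma applied to the reverse order. The condition to verify is that every descending chain $(B_i)$ in this poset has a lower bound still inside it, namely that $B:=\bigcap_i B_i$ remains a kernel set. This is exactly what upper semi-continuity should supply: for a decreasing sequence $B_n\downarrow B$ one has $\Psi(B_n)\downarrow\Psi(B)$, hence $\Psi(B)=\bigcap_n\Psi(B_n)$, and since $\mathbf{0}\in\Psi(B_n)$ for every $n$ we would conclude $\mathbf{0}\in\Psi(B)$, i.e. $B\in\text{Ker}(\Psi)$. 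The delicate point is bridging the \emph{sequential} formulation of upper semi-continuity with the possibly uncountable chains that Zorn requires; I would handle this using the restriction to closed subsets of $E$ and the second countability of the hit-or-miss topology on this space, which allow extracting a cofinal decreasing sequence from any chain and thus reducing to the sequential case. Any minimal element $M\subseteq A$ obtained this way is, by definition, a basis element, finishing the reverse inclusion.

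Finally, I would obtain the second identity by duality rather than by repeating the argument. Applying the first identity to the dual operator $\bar\Psi$ (again TI, increasing, and upper semi-continuous) gives $\bar\Psi(X)=\bigcup_{N\in\text{Bas}(\bar\Psi)} X\ominus N$; taking complements through $\Psi(X)=[\bar\Psi(X^c)]^c$ and applying the De Morgan duality $(Y\ominus N)^c=Y^c\oplus\check{N}$ turns this union of erosions into the intersection $\bigcap_{N\in\text{Bas}(\bar\Psi)} X\oplus\check{N}$, exactly matching the kernel form of Matheron's theorem.
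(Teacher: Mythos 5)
The paper states this theorem as a cited classical result and offers no proof of its own, so there is no internal argument to compare against; what you have written is a reconstruction of the standard proof from Maragos's original paper, and its skeleton is correct. The first identity is sound as you argue it: Matheron's kernel expansion, the antitone/absorption property of erosion in the structuring element, and the existence of a minimal kernel element below every kernel element via Zorn's lemma, with upper semi-continuity guaranteeing that the intersection of a chain of kernel sets is again a kernel set. Your acknowledgement of the chain-versus-sequence gap is the right instinct, and the fix you sketch (restriction to closed sets plus second countability of the hit-or-miss topology, which lets one replace an arbitrary chain by a countable decreasing subfamily with the same intersection) is exactly how the source handles it; you should also note explicitly that a set minimal in $\{B\in\text{Ker}(\Psi):B\subseteq A\}$ is automatically minimal in all of $\text{Ker}(\Psi)$, which is immediate but needed to conclude it lies in $\text{Bas}(\Psi)$. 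The one step that needs more care is the duality: you assert that $\bar{\Psi}$ is ``again TI, increasing, and upper semi-continuous,'' but upper semi-continuity is not self-dual --- complementation exchanges closed and open sets, and the dual of an u.s.c.\ operator is in general only lower semi-continuous, so the existence of $\text{Bas}(\bar{\Psi})$ does not follow from the stated hypotheses alone. In Maragos's treatment this is covered by a separate assumption (or by the corresponding l.s.c.\ condition on $\bar{\Psi}$); the paper's own statement of the theorem elides this as well, so your argument inherits rather than introduces the imprecision, but as written that sentence requires either justification or an added hypothesis.
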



\bigskip
\noindent
\textbf{Kernel and basis representation of TI increasing operators on functions.} Previous set theory was extended~\cite{Maragos89} to the case of mappings on functions $\Psi(f)$ and therefore useful for signal or grey-scale image operators. We focus on the case of closed functions $f$, i.e., its epigraph is a closed set. In that case, the dual operator is $\bar{\Psi}(f) =$ $-\Psi(-f)$ and the transpose function is $\check{f}(x)=$ $f(-x)$. 
Let 
\begin{equation*}
\text{Ker}(\Psi) :=\left\{f \: : \Psi(f)(\mathbf{0}) \geq 0 \right\}
\end{equation*}
be the kernel of operator $\Psi$. As for the TI set operators, a basis can be obtained from the kernel functions as its minimal elements with respect to the partial order $\leq$, i.e.,
\begin{equation*}
\text{Bas}(\Psi) := \left\{g \in \text{Ker}(\Psi) \: : \: [ f  \in \text{Ker}(\Psi)\:\: \text{and} \:\: f\leq g] \implies f=g \right\}.
\end{equation*}
This collection of functions can uniquely represent the operator. 

\begin{theorem}[Maragos (1989)~\cite{Maragos89}]
	Consider an upper semi-continuous operator $\Psi$ acting on an upper semi-continuous function \footnote{A function $f:\realset^{n} \to \bar{\realset}$ is \emph{upper semi-continuous} (u.s.c) (resp. lower semi-continuous (l.s.c.)) if and only if, for each $x\in \realset^{m}$ and $t \in \bar{\realset}, f(x)<t$ (resp. $f(x)>t$) implies that $f(y)<t$ (resp. $f(y)<t$) for all in some neighbourhood of $x$. Similarly, $f$ is u.s.c. (resp. l.s.c.) if and only if all its level sets are closed (resp. open) subsets of $\realset^{n}$. A function is continuous iff is both u.s.c and l.s.c.}  $f$. 
	Let $\text{Bas}(\Psi) =$ $\{ g_i\}_{i\in I}$ be its basis and $\text{Bas}(\bar{\Psi})=$ $\{ h_j\}_{j\in J}$ the basis of the dual operator. If $\Psi$ is a TI and increasing operator then it can be represented as 
	\begin{eqnarray}
	\Psi(f)(x) & = & \sup_{i\in I} \: (f\ominus g_i)(x) =
	\sup_{i\in I} \:  \inf_{y\in \mathbb{R}^n}\left\{f(x+y) - g_i(y)\right\} \\ 
	 & = & \inf_{j\in J} \: (f\oplus \check{h}_j)(x) =	
	 \inf_{j\in J} \:  \sup_{y\in \mathbb{R}^n}\left\{f(x-y) + \check{h}_j(y)\right\} 
	\end{eqnarray}
	The converse is true. Given a collection of functions ${\cal B} =$ $\{g_i\}_{i\in  I}$ such that all elements of it are minimal in $({\cal B},\leq)$, the operator $\Psi(f) =$ $\sup_{i\in I}$ $\{f\ominus g_i\}$ is a TI increasing operator whose basis is equal to ${\cal B}$. 
\end{theorem}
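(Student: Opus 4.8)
The plan is to mirror the set-operator proofs of Matheron and Maragos transported to the function lattice $\fun(\realset^n,\bar{\realset})$, using throughout that (as is standard in this framework) TI for function operators includes invariance under grey-level shifts, so that $\Psi(f+c)=\Psi(f)+c$ for every constant $c$. First I would reduce to the origin: by spatial translation invariance it suffices to establish $\Psi(f)(\mathbf{0})=\sup_{i\in I}(f\ominus g_i)(\mathbf{0})$, the value at a general $x$ following by translating $f$. I would also record the threshold--kernel correspondence supplied by vertical invariance, namely $\Psi(f)(\mathbf{0})\geq t \iff f-t\in\text{Ker}(\Psi)$, whence $\Psi(f)(\mathbf{0})=\sup\{t : f-t\in\text{Ker}(\Psi)\}$.

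For the easy inequality ``$\geq$'', fix any kernel function $g\in\text{Ker}(\Psi)$ and set $c=(f\ominus g)(\mathbf{0})=\inf_y\{f(y)-g(y)\}$. Then $f\geq g+c$ pointwise, and since $\Psi$ is increasing and commutes with the additive constant $c$, one gets $\Psi(f)(\mathbf{0})\geq\Psi(g+c)(\mathbf{0})=\Psi(g)(\mathbf{0})+c\geq c$. Taking the supremum over $g$ ranging in the basis (a subset of the kernel) gives $\Psi(f)(\mathbf{0})\geq\sup_i(f\ominus g_i)(\mathbf{0})$.

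The substantive direction is ``$\leq$''. Put $t=\Psi(f)(\mathbf{0})$; by the threshold correspondence $f-t\in\text{Ker}(\Psi)$. The key step---and the one I expect to be the main obstacle---is to produce a basis element $g\in\text{Bas}(\Psi)$ with $g\leq f-t$. This is exactly where upper semi-continuity enters: following the footnoted definition, one orders the kernel functions lying below $f-t$ by $\leq$, checks that the infimum of any decreasing chain remains in the kernel (using, via epigraphs/level sets, that $X_n\downarrow X$ forces $\Phi(X_n)\downarrow\Phi(X)$), and applies Zorn's lemma to extract a minimal such kernel function, i.e. a basis element. Granting such a $g$, from $g\leq f-t$ we obtain $(f-t)\ominus g\geq 0$ at the origin, that is $(f\ominus g)(\mathbf{0})\geq t$, so $\sup_i(f\ominus g_i)(\mathbf{0})\geq t=\Psi(f)(\mathbf{0})$. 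Combining the two inequalities and undoing the reduction to the origin yields the sup-of-erosions representation.

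The inf-of-dilations representation I would obtain by duality rather than repeating the argument: the dual $\bar{\Psi}(f)=-\Psi(-f)$ is again TI, increasing and upper semi-continuous, so the first representation applies to it with basis $\{h_j\}$, giving $\bar{\Psi}(f)=\sup_j(f\ominus h_j)$. Substituting $f\mapsto -f$, negating, and using the adjunction identity $-\big((-f)\ominus h_j\big)=f\oplus\check{h}_j$ converts the supremum of erosions into the claimed $\inf_j(f\oplus\check{h}_j)$. For the converse, $\Psi(f)=\sup_i(f\ominus g_i)$ is increasing and TI because each erosion is and suprema preserve both properties, so the only real content is $\text{Bas}(\Psi)=\mathcal{B}$. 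Here I would show each $g_k\in\mathcal{B}$ lies in the kernel, since $(g_k\ominus g_k)(\mathbf{0})=0$, and is minimal there: if $f\in\text{Ker}(\Psi)$ and $f\leq g_k$, then minimality of $g_k$ in $\mathcal{B}$ forces $g_i\not\leq g_k$ for $i\neq k$, hence $(g_k\ominus g_i)(\mathbf{0})<0$ and a fortiori $(f\ominus g_i)(\mathbf{0})<0$; the inequality $\sup_i(f\ominus g_i)(\mathbf{0})\geq 0$ then localizes to the index $k$, and $f\leq g_k$ together with $(f\ominus g_k)(\mathbf{0})\geq 0$ forces $f=g_k$. The delicate point to watch in this last step is attainment of the supremum over $i\neq k$, which is again controlled by the upper-semicontinuity and minimality hypotheses on $\mathcal{B}$.
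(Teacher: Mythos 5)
The paper does not prove this theorem; it is imported verbatim from Maragos (1989) as part of the background, so there is no in-paper argument to compare against. Your reconstruction is the standard kernel/basis proof from the cited source --- vertical-translation invariance giving the threshold--kernel correspondence, the easy bound from monotonicity, Zorn plus upper semi-continuity to extract a minimal kernel element below $f-\Psi(f)(\mathbf{0})$, and duality via $-\bigl((-f)\ominus h_j\bigr)=f\oplus\check{h}_j$ --- and it is essentially correct; the only points to tighten are that upper semi-continuity of $\bar{\Psi}$ does not follow automatically from that of $\Psi$ (the theorem sidesteps this by \emph{assuming} $\text{Bas}(\bar{\Psi})$ exists), and the attainment issue in the converse that you already flag yourself.
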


For some operators, the basis can be very large (potentially infinity) and even if the above theorem represents exactly the operator by using a full expansion of all erosions, we can obtain an approximation based on smaller collections or truncated bases ${\cal B} \subset$ $\text{Bas}(\Psi)$ and $\bar{{\cal B}} \subset$ $\text{Bas}(\bar{\Psi})$. Then, from the operators
$\Psi_{l}(f) = \sup_{g\in {\cal B}} \{ f\ominus g \}$ and $\Psi_{u}(f) = \inf_{h\in\bar{{\cal B}}} \{ f\oplus \check{h} \}$
the original $\Psi$ is bounded from below and above, i.e.,
$\Psi_{l}(f) \leq $ $\Psi(f) \leq $ $\Psi_{u}(f)$. Note also that in the case of a non minimal representation by a subset of the kernel functions larger than the basis, one just gets a redundant still satisfactory representation.

\bigskip

The extension to TI \emph{non necessarily increasing mappings} was presented by Bannon and Barrera in \cite{banon1991minimal}, which involves a supremum of an operator involving an erosion and an anti-dilation. This part of the Matheron-Maragos-Bannon-Barrera (MMBB) theory is out of the scope of this paper.


\subsection{Max-Min representation for Piecewise-linear functions}

Let us also remind the fundamental results from the representation theory by Ovchinnikov~\cite{Ovchinnikov01,Ovchinnikov02} which is rooted in a Boolean and lattice framework and therefore related to the MMBB theorems. Just note that here we focus on a representation for functions and previously it was a representation of operators on functions. Let $f$ be a smooth function on a closed domain $\Omega \subset \realset^n$. We are going to represent it by a family of affine linear functions $g_t$ which are tangent hyperplanes to the graph of $f$. Namely, for a point $t\in \Omega$, one defines  
\begin{equation}\label{AffineLinearFunctions}
g_t(x) = \left\langle \nabla f(t), x-t\right\rangle + f(t), \:\:\: x\in \Omega,
\end{equation}
where $\nabla f(t)$ is the gradient vector of $f$ at $t$. We have the following general result about the representation of piecewise-linear (PL) functions as max-min polynomial of its linear components. 

\begin{theorem}[\cite{gorokhovik1994piecewise}\cite{bartels1995continuous}\cite{Ovchinnikov02}]
	Let $f$ be a PL function on a closed convex domain $\Omega\subset \realset^n$ and $\{g_1=\beta_1 x +\alpha_1, \cdots , g_d = \beta_d x +\alpha_d\}$ be the set of the $d$ linear components of $f$, with $\beta_i,\alpha_i \in \realset^n$. There is a family $\{K_i\}_{i\in I}$ of subsets of set $\{1, \cdots, d\}$ such that
	\begin{equation}
	f(x) = \bigvee_{i\in I} \bigwedge_{j\in K_i} g_{j}(x), \:\: x\in \Omega.
	\end{equation}
	Conversely, for any family of distinct linear functions $\{g_1, \cdots , g_d\}$ the above formula defines a PL function.
\end{theorem}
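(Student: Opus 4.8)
The plan is to read off the family $\{K_i\}$ directly from the cell decomposition that a PL function carries, and then to verify the two one-sided bounds that force a max-min representation. Since $f$ is (continuous) PL on the convex set $\Omega$, it subdivides $\Omega$ into finitely many full-dimensional convex polyhedral cells $\Omega_1,\dots,\Omega_m$, on each of which $f$ coincides with a single one of the components; write $f|_{\Omega_k}=g_{\sigma(k)}$. I take $I=\{1,\dots,m\}$ and let $K_k$ collect the indices of the components that dominate the active one over the whole cell,
\begin{equation*}
K_k := \bigl\{\, j\in\{1,\dots,d\} : g_{\sigma(k)}(x)\le g_j(x)\ \text{ for all } x\in\Omega_k \,\bigr\}.
\end{equation*}
Then $\sigma(k)\in K_k$, so each $K_k\neq\emptyset$, and the theorem reduces to the two claims
\begin{align*}
\bigwedge_{j\in K_k} g_j(x) &= f(x) &&\text{for } x\in\Omega_k, \\
\bigwedge_{j\in K_k} g_j(x) &\le f(x) &&\text{for all } x\in\Omega.
\end{align*}
Indeed, granting these, for any $x\in\Omega$ one picks $k$ with $x\in\Omega_k$: the first line makes the $k$-th inner infimum equal $f(x)$, while the second keeps every inner infimum $\le f(x)$, so $f(x)=\bigvee_{k}\bigwedge_{j\in K_k} g_j(x)$, which is the asserted formula.

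The first claim is immediate: on $\Omega_k$ every $g_j$ with $j\in K_k$ satisfies $g_j\ge g_{\sigma(k)}$, and $g_{\sigma(k)}$ itself occurs in the infimum, so $\bigwedge_{j\in K_k}g_j=g_{\sigma(k)}=f$ there. The second claim is the real content and the step I expect to be the main obstacle: at an arbitrary point $\bar x$ (say $\bar x\in\Omega_l$, so $f(\bar x)=g_{\sigma(l)}(\bar x)$) I must produce one index $j\in K_k$ with $g_j(\bar x)\le f(\bar x)$. If $g_{\sigma(k)}(\bar x)\le f(\bar x)$ we are finished, so the difficulty is the case $g_{\sigma(k)}(\bar x)>f(\bar x)$, where the active piece of $\Omega_k$ overshoots and a \emph{different} dominating component must rescue the bound. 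The natural device is to join an interior point of $\Omega_k$ to $\bar x$ by the segment permitted by the convexity of $\Omega$, follow $f$ along it, and locate the first place where $f$ falls strictly below the affine extension of $g_{\sigma(k)}$; the component that $f$ picks up just past that crossing is the candidate. The delicate point — and where the careful work lies — is to certify simultaneously that this candidate belongs to $K_k$ (that is, dominates $g_{\sigma(k)}$ on \emph{all} of $\Omega_k$, not merely along the chosen segment) and that it lies below $f(\bar x)$. I would handle this through the continuity of $f$ across the shared facets of adjacent cells, which forces two neighbouring pieces to coincide on the separating hyperplane and to be ordered on each of its sides, combined with convexity of $\Omega$, typically organised as an induction along a chain of adjacent cells running from $\Omega_k$ to $\Omega_l$.

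The converse is routine. For any family $\{K_i\}_{i\in I}$ of subsets of $\{1,\dots,d\}$, each inner term $\bigwedge_{j\in K_i}g_j$ is a finite infimum of affine functions, hence a continuous PL function, and the outer finite supremum of continuous PL functions is again continuous and PL, all of whose linear components are drawn from $\{g_1,\dots,g_d\}$. Thus $\bigvee_{i\in I}\bigwedge_{j\in K_i}g_j$ is a PL function, completing the statement.
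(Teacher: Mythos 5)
The paper itself gives no proof of this theorem; it is imported verbatim from the cited references (Gorokhovik--Zorko, Bartels--Kuntz--Scholtes, Ovchinnikov), so I can only judge your argument against the standard proof in those sources. Your skeleton is exactly the standard one: take the polyhedral cells $\Omega_k$ on which $f$ is affine, define $K_k$ as the indices of components dominating the active component $g_{\sigma(k)}$ on all of $\Omega_k$, observe that the inner infimum equals $f$ on $\Omega_k$, and reduce everything to the lemma that for every $\bar x\in\Omega$ some $j\in K_k$ satisfies $g_j(\bar x)\le f(\bar x)$. That reduction is correct, and the converse direction is indeed routine.

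The problem is that the lemma you isolate as ``the real content'' is never actually proved: you describe a strategy and then write ``I would handle this through\dots'', which is a statement of intent, not an argument. Worse, the single-segment device you propose does not suffice on its own. Following the segment from an interior point of $\Omega_k$ to $\bar x$ and taking the component picked up at the first crossing below the affine extension of $g_{\sigma(k)}$ only controls that component \emph{along the segment}; an affine $g_j$ can dominate $g_{\sigma(k)}$ on the one-dimensional slice of $\Omega_k$ that the segment traverses while dipping below $g_{\sigma(k)}$ elsewhere in $\Omega_k$ (already in dimension $2$), in which case $j\notin K_k$ and the candidate fails. This is precisely why the published proofs (e.g.\ Ovchinnikov's Lemma on pairs of cells) run an induction over a chain of adjacent cells joining $\Omega_k$ to the cell containing $\bar x$, using at each shared facet that the two neighbouring affine pieces agree on the separating hyperplane and are oppositely ordered on its two sides, and then propagate a domination statement over the \emph{whole} cell, not over a line. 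You name all of these ingredients, which shows you see where the difficulty sits, but until that induction is carried out the central inequality $\bigwedge_{j\in K_k}g_j\le f$ on $\Omega$ is unproved, and with it the theorem. As written, the proposal is a correct reduction plus an unproved key lemma, i.e.\ a genuine gap.
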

The expression is called a max-min (or lattice) polynomial in the variable $g_i$. We note that a PL function $f$ on $\Omega$ is a ``selector'' of its components $g_i$, i.e., $\forall x\in \Omega$ there is an $i$ such that $f(x)=g_i(x)$. The converse is also true, with functions $\{ g_i\}$ linearly ordered over $\Omega$~\cite{Ovchinnikov02}.

Let us also mention that from this representation we can show that a PL function is representable as a difference of two concave (equivalently, convex)-PL functions~\cite{Ovchinnikov02}. More precisely, let note $h_i(x) = \inf_{j\in K_i} g_j(x)$, with $h_i$ a concave function. We are reminded that sums and minimums of concave functions are concave. One have $h_i = $ $\sum_k h_k - \sum_{k \neq i} h_k$, therefore
\begin{equation*}
f = \bigvee_{i\in I} h_i = \sum_k h_k - \bigwedge_{i\in I} \sum_{k \neq i} h_k.
\end{equation*}

\section{Morphological universal activation functions}
\label{sec:activation}

Using the previous results, we can state the two following results for the activation function and the pooling by increasing operators. Additionally, a proposed layer used in the experimental section is formulated.

\subsection{Universal representation for activation function and pooling}

\begin{proposition}
Any piecewise-linear activation function $\sigma: \realset \to \realset$ can be universally expressed as
\begin{eqnarray}\label{GeneralActivation}
\sigma(x) =\bigwedge_{j\in J}\left[ \bigvee_{i\in I}\left\{\beta_i^j x + \alpha_i^j\right\}\right] 
= \bigwedge_{j\in J} p_{j}(x)
\end{eqnarray}
where $p_{j} = \bigvee_{i\in I}\left\{\beta_i^j x + \alpha_i^j\right\}$ is a PL convex function.
\end{proposition}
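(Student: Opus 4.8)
The plan is to derive the announced conjunctive (min-of-max) form directly from the cited max-min representation theorem, applied not to $\sigma$ but to $-\sigma$, and then to invoke the order-reversing behaviour of negation. Since $\sigma$ is piecewise-linear on the closed convex domain $\Omega=\realset$, so is $-\sigma$, and its linear components are exactly the negatives of those of $\sigma$. Hence the theorem of Gorokhovik--Bartels--Ovchinnikov furnishes a finite family $\{K_i\}_{i\in I}$ of subsets of the component indices and affine components $\tilde{g}_k(x)=b_k x + a_k$ such that
\[
-\sigma(x) \;=\; \bigvee_{i\in I}\;\bigwedge_{k\in K_i}\tilde{g}_k(x), \qquad x\in\realset .
\]

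Next I would negate both sides. Because $t\mapsto -t$ reverses the natural order on $\realset$, it interchanges $\bigvee$ and $\bigwedge$; applying this identity to the outer supremum and to each inner infimum gives
\[
\sigma(x)\;=\;\bigwedge_{i\in I}\;\bigvee_{k\in K_i}\bigl(-\tilde{g}_k(x)\bigr).
\]
Each $-\tilde{g}_k(x)=-b_k x-a_k$ is again affine, so every inner term $\bigvee_{k\in K_i}(-\tilde g_k)$ is a pointwise supremum of finitely many affine functions, hence convex and piecewise-linear. Thus $\sigma$ is already exhibited as an infimum of convex PL functions, which is the structural content of the proposition; what remains is purely cosmetic relabelling to reach the stated coefficient notation.

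The only genuine (and minor) obstacle is bookkeeping on the index sets. To match $\sigma(x)=\bigwedge_{j\in J}\bigvee_{i\in I}\{\beta^{j}_{i}x+\alpha^{j}_{i}\}$ one renames the outer index $i\in I$ as $j\in J$ and, within the $j$-th convex piece, sets $\beta^{j}_{i}:=-b_k$ and $\alpha^{j}_{i}:=-a_k$ as $k$ ranges over $K_i$. The stated form uses one common inner index set $I$ for every $p_j$, whereas the sets $K_i$ may have different cardinalities; this is harmless, since one pads each $p_j$ up to the common cardinality $\max_i|K_i|$ by repeating one of its own affine terms, which leaves the supremum unchanged. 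Beyond this relabelling and the elementary order-reversal identity, the result is inherited directly from the cited max-min theorem, so no further computation is required.
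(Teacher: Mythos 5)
Your proposal is correct and follows essentially the same route as the paper, which states the proposition as a direct consequence of the cited Gorokhovik--Bartels--Ovchinnikov max-min representation theorem without writing out a proof. The one substantive detail you add --- passing to $-\sigma$ and using order reversal to convert the theorem's max-of-min form into the stated min-of-max form, plus the index padding --- is exactly the step the paper leaves implicit, and you handle it correctly.
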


\begin{proposition}[Pooling]
Any increasing pooling operator $\pi: \realset^n \to \realset^{n'}$ can be universally expressed as
\begin{equation}
\pi(f)(x) = \bigwedge_{j\in J} \left[\delta_{b_j}(f) \right (K\cdot x)],
\end{equation}
where $\{b_j\}_{j\in J}$ is a family of structuring functions defining by transpose the basis of the dual operator to $\pi$.
\end{proposition}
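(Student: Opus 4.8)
The plan is to reduce the claim to the infimum-of-dilations half of Maragos' basis representation for increasing operators on functions, after first isolating the subsampling induced by the stride $K$. The essential observation is that an increasing pooling operator is \emph{not} translation invariant as a map $\realset^n\to\realset^{n'}$, precisely because of the stride: it commutes with translations only up to the subsampling factor. So the first step is to factor $\pi$ as $\pi(f)(x)=\Psi(f)(K\cdot x)$, where $\Psi:\fun(\realset^n,\bar{\realset})\to\fun(\realset^n,\bar{\realset})$ is the underlying window operator acting \emph{before} subsampling. By hypothesis $\Psi$ is increasing, and it is translation invariant because the pooling window has a fixed shape independent of position; I would also assume, as is standard for the basis to exist, that $\Psi$ is upper semi-continuous and acts on u.s.c. functions, so that the Maragos function-operator theorem applies.

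Next I would apply the dual (infimum) representation of that theorem to $\Psi$. Writing $\text{Bas}(\bar{\Psi})=\{h_j\}_{j\in J}$ for the basis of the dual operator, the theorem gives
\begin{equation*}
\Psi(f) = \bigwedge_{j\in J}\,(f\oplus \check{h}_j) = \bigwedge_{j\in J}\,\delta_{\check{h}_j}(f).
\end{equation*}
Setting $b_j:=\check{h}_j$, the family $\{b_j\}_{j\in J}$ is exactly ``the transpose of the basis of the dual operator to $\pi$'', and each term $\delta_{b_j}(f)=f\oplus b_j$ is a sup-convolution in the sense of \eqref{DilationFunctions}. This is the crux of the argument: the representability of $\Psi$ as an infimum of dilations is not proved directly here but inherited wholesale from the MMBB theory.

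Finally I would re-insert the subsampling. Evaluating the representation of $\Psi$ at the subsampled points $K\cdot x$ yields
\begin{equation*}
\pi(f)(x) = \Psi(f)(K\cdot x) = \bigwedge_{j\in J}\,\delta_{b_j}(f)(K\cdot x),
\end{equation*}
which is the asserted formula. For the converse direction I would invoke the converse part of the same theorem: an infimum of dilations is automatically TI and increasing, and post-composition with subsampling preserves the increasing property, so any such subsampled infimum is indeed an increasing pooling operator.

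I expect the main obstacle to be conceptual rather than computational, namely correctly separating the stride-induced subsampling from the genuinely translation-invariant part, since the MMBB machinery is stated for TI operators while $\pi$ itself is not TI. A secondary technical point to handle with care is the upper semi-continuity required for the basis $\text{Bas}(\bar{\Psi})$ to exist; without it one must retreat to the kernel representation (Matheron's theorem transported to functions) and take the infimum over the whole dual kernel rather than over a minimal basis, at the cost of a redundant but still valid representation.
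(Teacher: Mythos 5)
Your proposal is correct and follows essentially the same route as the paper, which states this proposition as a direct consequence of Maragos' infimum-of-dilations representation theorem with $b_j=\check{h}_j$ the transposed basis functions of the dual operator. Your explicit factorization of the stride into a translation-invariant window operator followed by subsampling, and your remarks on upper semi-continuity, merely make precise the hypotheses the paper leaves implicit.
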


In both cases, there is of course a dual representation using the maximum of erosions. The dilation operator of type $z \mapsto \bigvee_i [\beta_i z + \alpha_i]$ plays a fundamental role in multiplicative morphology~\cite{heijmans1991theoretical}. 

\bigskip
\textbf{Remark: Tropical polynomial interpretation.}
The max-affine function $p_{j} = \bigvee_{i\in I}\left\{\beta_i^j z + \alpha_i^j\right\}$ is a tropical \footnote{\emph{Tropical geometry} is the study of polynomials and their geometric properties when addition is replaced with a minimum operator and multiplication is replaced with ordinary addition.} polynomial such that in that geometry, the degree of the polynomial corresponds to the number of pieces of the PL convex function. 
The set of such polynomials constitutes the semiring $\realset_{\max}$ of tropical
polynomials. Tropical geometry in the context of lattice theory and neural networks is an active area of research 
\cite{maragos2020multivariate} 
\cite{maragos2021tropical}
\cite{dimitriadis2022advances}, however those previous works have not considered the use of minimal representation of tropical polynomials as generalised activation functions.

\


\textbf{Remark: Relationships to other universal approximation theorems.} These results on universal representation of layers in DCNN are  related to study the capacity of neural networks to be universal approximators for smooth functions. 
For instance, both maxout networks~\cite{goodfellow2013maxout} and max-plus networks~\cite{zhang2019max} can approximate arbitrarily well any continuous function on a compact domain. The proofs are based on the fact that~\cite{wang2004general} continuous PL functions can be expressed as a difference of two convex PL functions, and each convex PL can be seen as the maximum of affine terms. 

Tropical formulation of ReLU networks has shown that a deeper network is exponentially more expressive than a shallow network~\cite{Zhang2018tropical}. To explore the expressiveness of networks with our universal activation function and pooling layer respect to the deepness of DCNN is therefore a fundamental relevant topic for future research.








\subsection{MorphoActivation Layer}

We have now all the elements to justify why in terms of universal representation theory of nonlinear operators ReLU and max-pooling can be replaced by a more general nonlinear operator defined by a morphological combination of activation function, dilations and downsampling, using a max-plus layer or its dual. 

More precisely, we introduce two alternative architectures of the MorphoActivation layer (Activation and Pooling Morphological Operator) $f \mapsto $ $\Psi^{\text{Morpho}}: \realset^n \to \realset^{n'}$ either by composition $[\pi \circ \sigma(f)](x)$ or $[\sigma \circ \pi(f)](x)$ as follows:
\begin{equation}\label{MorphoAct1}
	\Psi^{\text{Morpho}}_{1}(f) = 
	\bigwedge_{1\leq j \leq M}\left\{ \delta^{\maxpool}_{R,b_j}\left(\bigvee_{1\leq i \leq N}( \beta_{i}^{j} f + \alpha_{i}^{j})\right) \right\},
\end{equation}
\begin{equation}\label{MorphoAct2}
	\Psi^{\text{Morpho}}_{2}(f) = 
	\bigwedge_{1\leq i \leq N}\left\{ 
	\bigvee_{1\leq j \leq M} \left( \beta_{i}^{j} \delta^{\maxpool}_{R,b_i}(f) +  \alpha_{i}^{j} \right)
	\right\},
\end{equation}
where 
\begin{equation*}
	\left\{
	\begin{array}{l}
	\delta^{\maxpool}_{R,b_j}(f)(x) = \delta_{b_j}(f)(R \cdot x), \:\:\:\: \text{with} \\ \\
	\delta_{b_j}(f)(x)  = (f\oplus b_j)(x) =  \bigvee_{y\in W}\{ f(x - y) + b_j(y) \} 
	\end{array}
	\right.	
\end{equation*}

In the context of an end-to-end learning DCNN, the parameters $\beta_j$, $\alpha_j$ and structuring functions $b_{j}$ are learnt by backpropagation \cite{LEMD2022}. The learnable structuring functions $\mathbf{b}_{j}$ play the same role as the kernel in the convolutions. Note that one can have $R=1$, the pooling does not involve downsampling. 
We note that in a DCNN network the output of each layer $T^{k}$ is composed of the affine function $x\mapsto \mathbf{W}^{k}x + \mathbf{b}^{k}$, where $\mathbf{W}^{k}$ is the weight matrix (convolution weights in a CNN layer) and $\mathbf{b}^{k}$ the bias, and the activation function $\sigma$, i.e., $T^{k} =$ $\sigma\left( \mathbf{W}^{k} T^{k-1}  + \mathbf{b}^{k}\right)$, where $\sigma$ is acting elementwise. Using our general activation~\eqref{GeneralActivation}, we obtain that
\begin{equation*}
T^{k} =\bigwedge_{j\in J}\left[ \bigvee_{i\in I}\left\{\beta_i^{jk} \mathbf{W}^{k} T^{k-1}  + \beta_i^{jk} \mathbf{b}^{k} + \alpha_i^{jk}\right\}\right] ,
\end{equation*}
and therefore the bias has two terms which are learnt. We propose therefore to consider in our experiments that $\mathbf{b}^{k}$ is set to zero since its role will be replaced by learning the $\alpha_i^{jk}$.

\section{Experimental Section}

Firstly, to illustrate the kind of activation functions that our proposition can learn, we use the MNIST dataset as a ten class supervised classification problem and an architecture composed of two convolutional layers and a dense layer for reducing to the number of classes. The activation functions that we optimise by stochastic gradient descent have as general form $\min ( \max (\beta_0 x+\alpha_0,\beta_1 x+\alpha_1,\alpha_2),\alpha_3)$, which corresponds to \eqref{MorphoAct1} and \eqref{MorphoAct2} where $R=1$, i.e., without pooling. We have initialised all the activation functions to be equal to $\max(\min(\relu(x),6),-6)$ as it is illustrated in Fig.\ref{fig:VisualizationActivation}(left). The accuracy of this network without any training is $14\%$. Surprisingly when one optimises \footnote{We use ADAM optimizer with a categorical entropy as loss function, a batch size of 256 images and a learning rate of 0.001.} \emph{only} the parameters of activation functions the network accuracy increases to the acceptable performance of $92.38\%$ and a large variability of activation functions are found Fig.\ref{fig:VisualizationActivation}(center).
This is a way to assess the expressive power\footnote{The expressive power describes neural networks ability to approximate functions.} of the parameter of the activation as it has been proposed in \cite{frankle2020training}.
Additionally, an adequate separation among classes is noted by visualising the projection to two-dimensional space of the last layer via the t-SNE \cite{van2008visualizing} algorithm. Of course, a much better accuracy ($98,58\%$) and inter-class separation is obtained by optimising all the parameters of the network Fig.\ref{fig:VisualizationActivation}(right). 
\begin{figure}
    \centering
    \leavevmode
    \includegraphics[width=.32\columnwidth,bb=0 0 362 265]{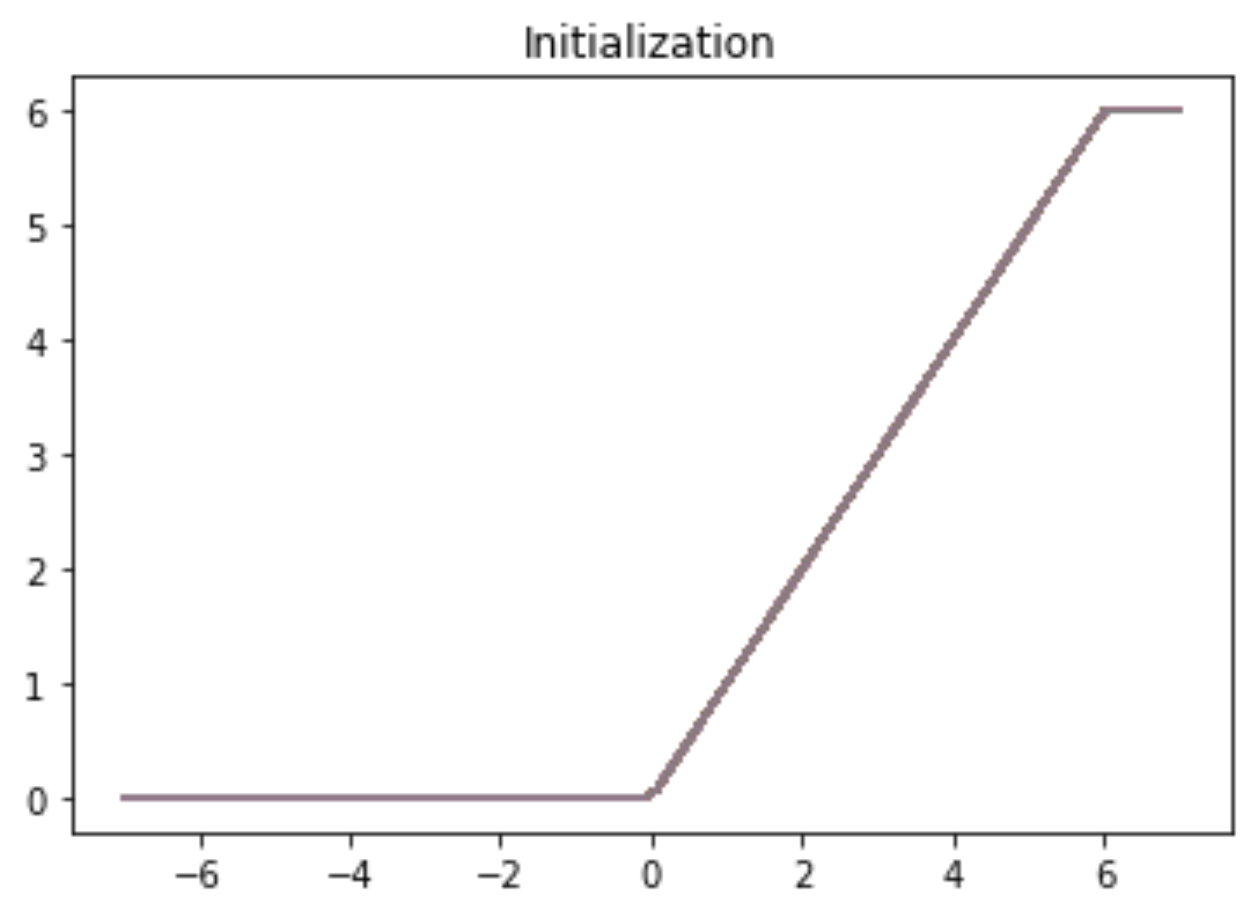}
    \includegraphics[width=.32\columnwidth,bb=0 0 371 265]{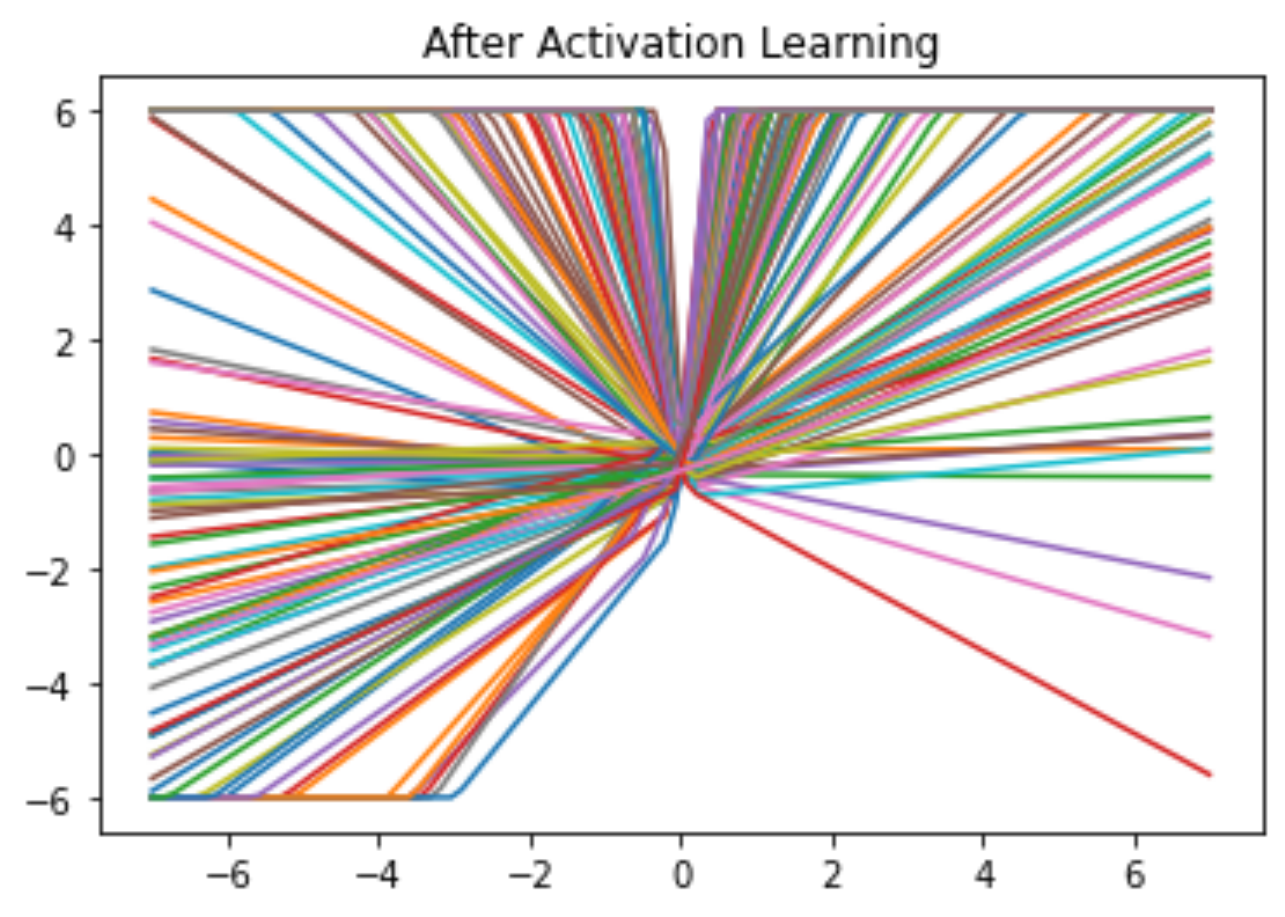}
    \includegraphics[width=.32\columnwidth,bb=0 0 371 265]{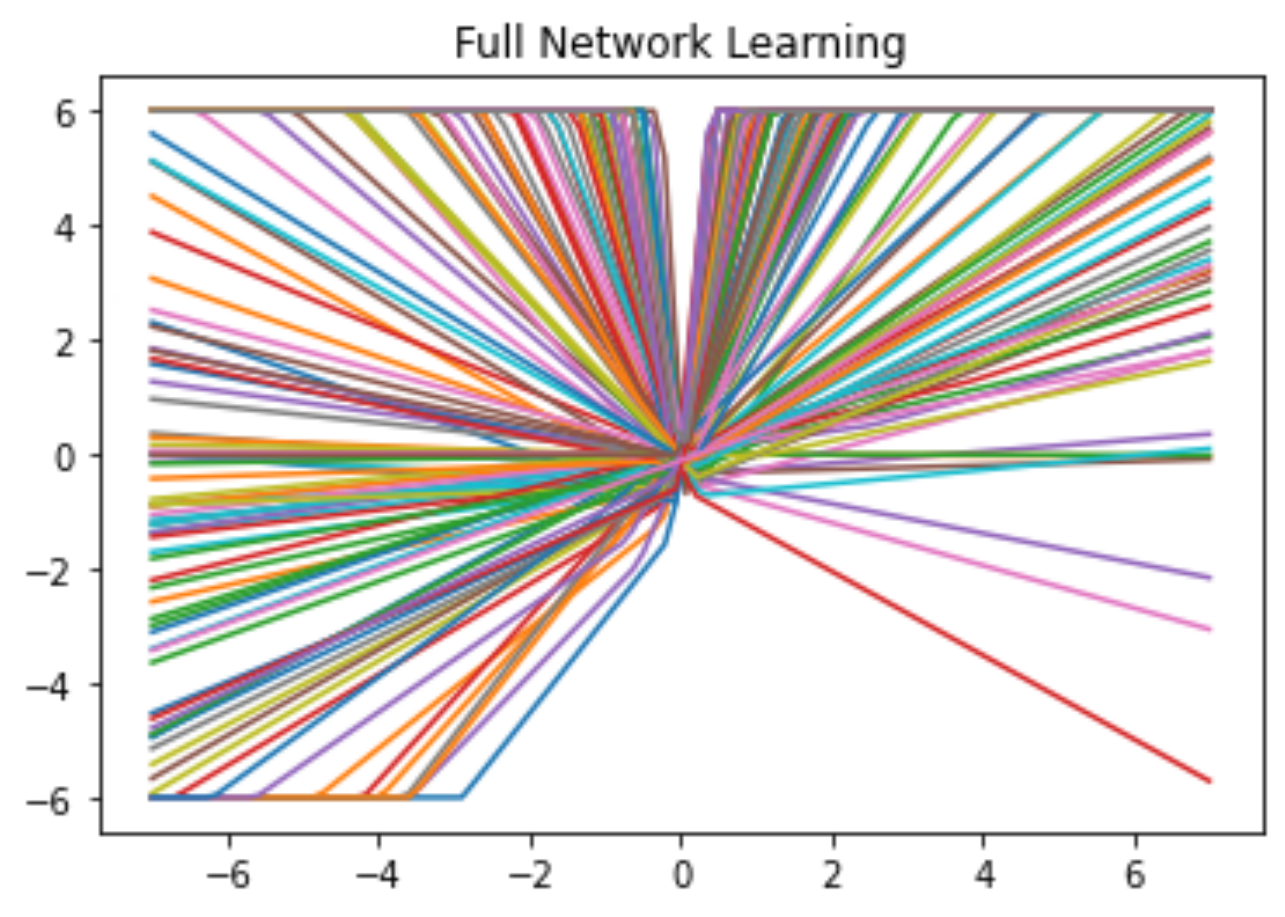}
    \includegraphics[width=.32\columnwidth,bb= 0 0 488 467]{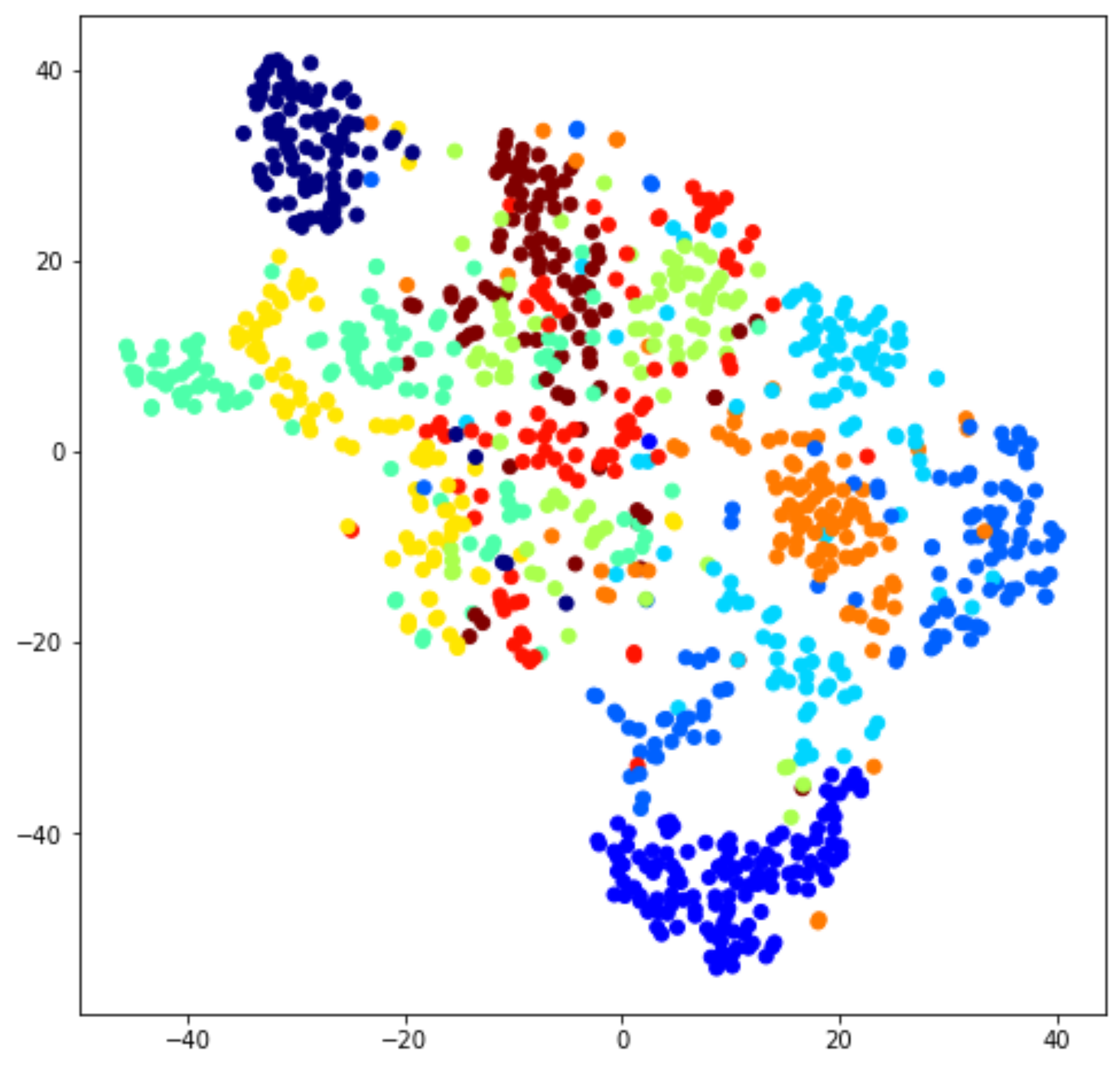}
    \includegraphics[width=.32\columnwidth,bb=0 0 488 467]{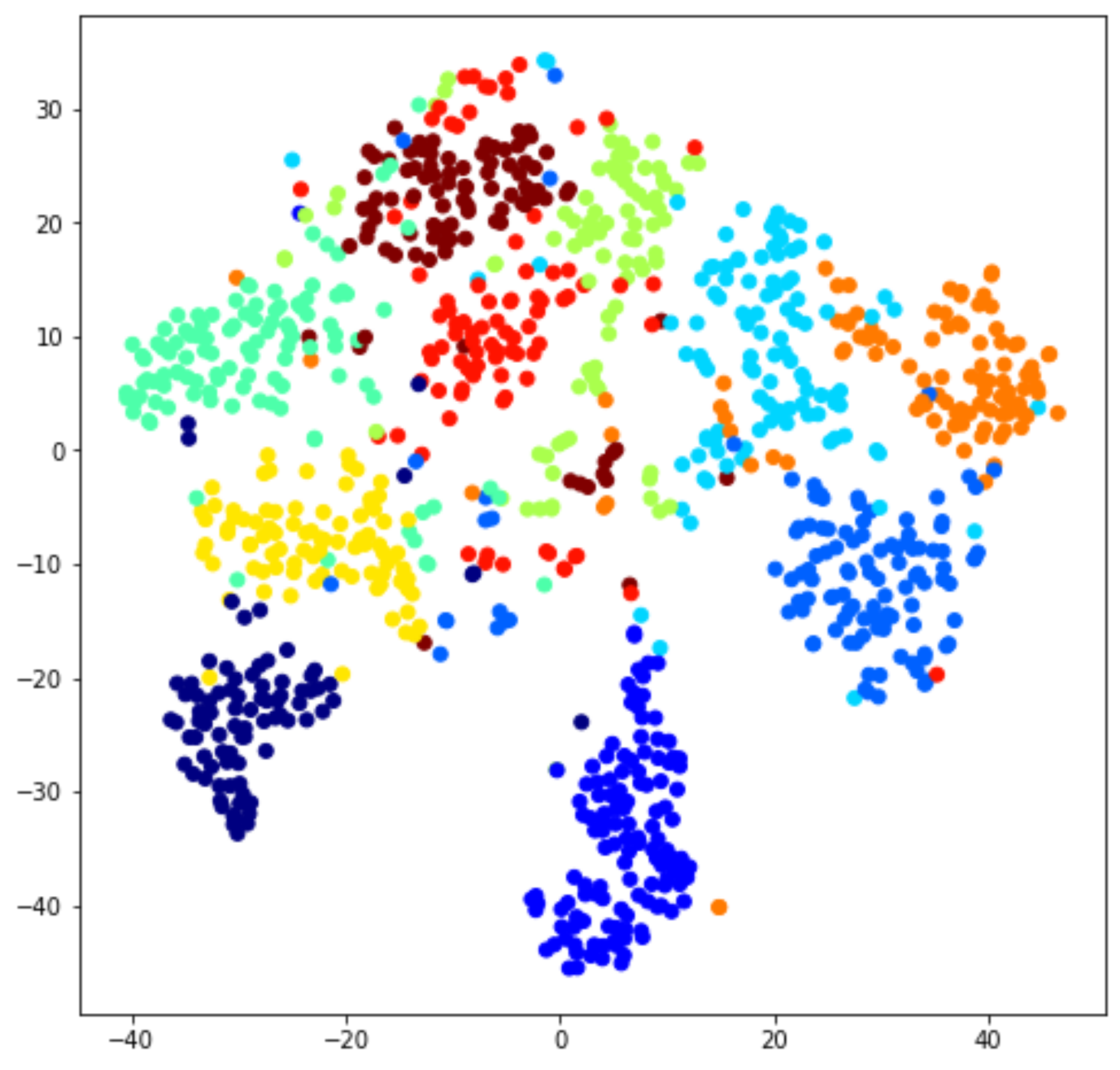}
    \includegraphics[width=.32\columnwidth,bb=0 0 488 467]{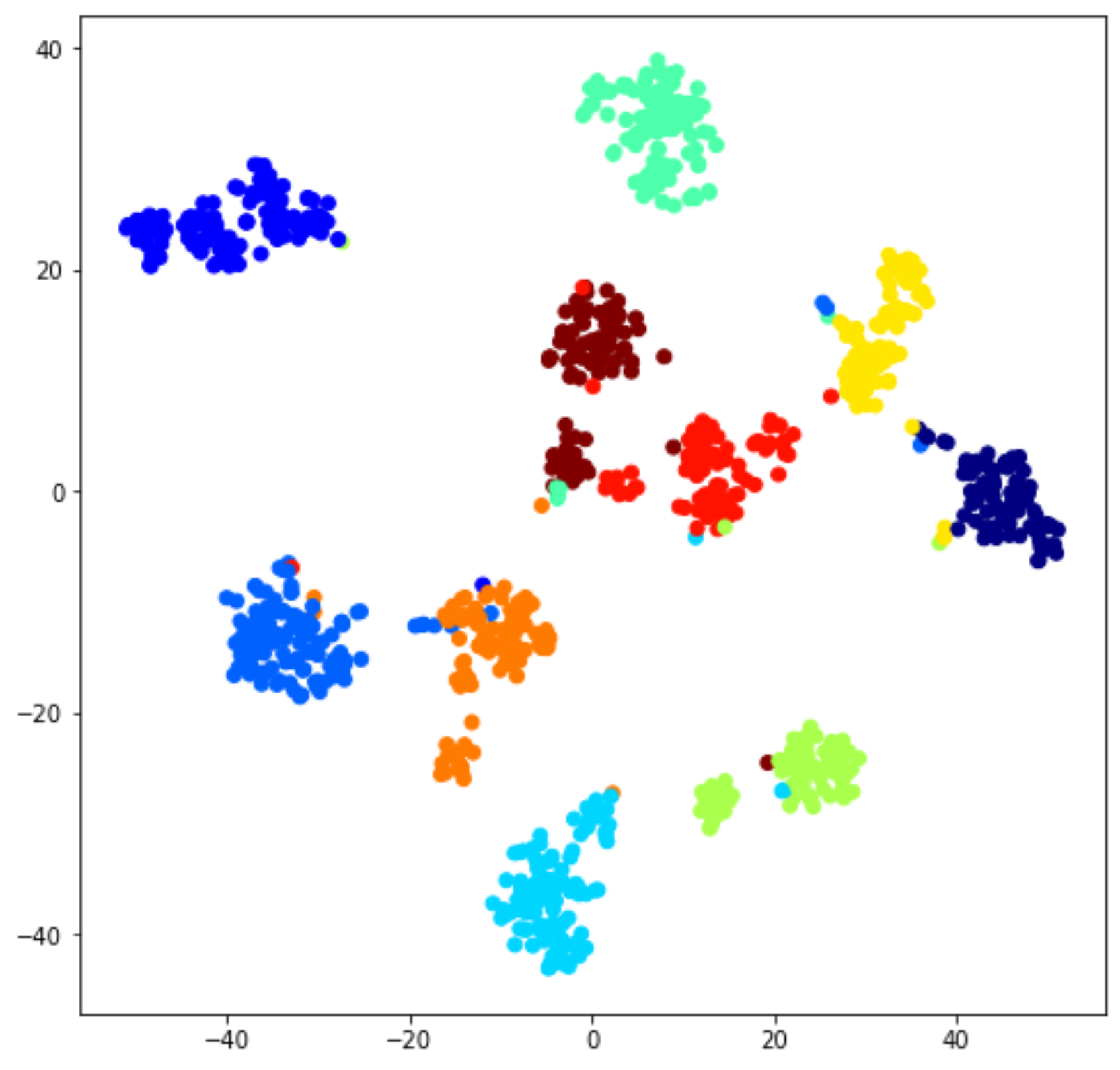}
    \caption{First Row: Left: Random Initialisation with (14\%) of accuracy on the test set, We use a simplified version of proposed activation $\min ( \max (\beta_0 x+\alpha_0,\beta_1 x+\alpha_1,\alpha_2),\alpha_3)$, with initialisation $\max(\min(\relu(x),6),-6)$ Centre: Training only activation functions (92.38\%), Right: Training Full Network (98,58\%). Second Row: t-SNE visualisation of last layer is the 10-classes MNIST prediction for a CNN. }
    \label{fig:VisualizationActivation}
\end{figure}


Secondly,
we compare the performance of \eqref{eq:SelfDualRelu}, \eqref{eq:SelfDualPosNegPar}, \eqref{MorphoAct1} and \eqref{MorphoAct2} following the common practice and train all the models using a training set and report the standard top-one error rate on a testing set. We use as architecture a classical two-layer CNN (without bias for \eqref{MorphoAct1} and \eqref{MorphoAct2}) with 128 filters of size ($3\times3$) per layer, and a final dense layer with dropout. After each convolution the different propositions are used to both produce a nonlinear mapping and reduce spatial dimension via pooling stride of two. As a manner of comparison, we include the case of a simple $\relu$ activation followed by a $\maxpool$ with stride two. The difference in top-one error rate on a testing set is reported in Table \ref{tab:ALL100Morpho} for CIFAR10, CIFAR100 and Fashion-MNIST databases. These quantitative results shown in propositions \eqref{eq:SelfDualRelu} and \eqref{eq:SelfDualPosNegPar} do not seem to improve the performance in the explored cases. Additionally, \eqref{MorphoAct2} performs better than  \eqref{MorphoAct1}, and  it improves the accuracy in comparison with our baseline in all the considered databases.

\begin{table}[]
    \centering
    \begin{tabular}{|c|c|c|c|c|c|c|c|c|c|}
    \hline
    \multicolumn{1}{|c|}{} & \multicolumn{3}{|c|}{Fashion MNIST} 
    & \multicolumn{3}{|c|}{CIFAR10} &
    \multicolumn{3}{|c|}{CIFAR100}\\
    \hline 
    \maxpool(\relu)  & \multicolumn{3}{|c|}{93.11} &
    \multicolumn{3}{|c|}{78.04} & \multicolumn{3}{|c|}{47.57}
    \\
    \hline 
    Self-dual Relu in  \eqref{eq:SelfDualRelu} & \multicolumn{3}{|c|}{-2.11} &
    \multicolumn{3}{|c|}{-20.12} & \multicolumn{3}{|c|}{-31.14}
    \\
    \hline 
    \eqref{eq:SelfDualPosNegPar}  & \multicolumn{3}{|c|}{-0.95} &
    \multicolumn{3}{|c|}{-1.75} & \multicolumn{3}{|c|}{-4.39}
    \\
    \hline
    MorphoActivation in \eqref{MorphoAct1} &  N=2      &  N=3   & N=4 &  N=2      &  N=3   & N=4 &  N=2      &  N=3   & N=4  \\ \hline
     M=2    &  -0.06 & -0.05 & -0.1 &   -0.42 &  0.02 & -0.02 & 0.44 & 0.7  & 0.4\\ \hline
     M=3    & -0.14 & -0.14 & -0.06 & -0.57 & -0.4  & -0.35  & 0.56 & 0.49 & 0.61\\  \hline
     M=4    & -0.02 & -0.08 & -0.01 & 0.05 & -0.62 & -0.5 & 0.41 & 0.35 & 0.73\\  \hline
    MorphoActivation in \eqref{MorphoAct2} &  N=2      &  N=3   & N=4 &  N=2      &  N=3   & N=4 &  N=2      &  N=3   & N=4 \\ \hline
     M=2    & 0.04 & -0.16 & -0.12  &  1.84 & 2.02 & 1.49 & 3.31 & 3.5 & 3.45\\ \hline
     M=3    & 0.08 & -0.09 &  \textbf{0.12} & 2.39 & 1.96 & 1.82 & 3.48 & 3.55 & \textbf{3.86}\\  \hline
     M=4    & -0.02 & 0.09 & -0.03 & \textbf{2.49} & 2.25 & 2.13 & 3.47 & 3.73 & 3.58\\  \hline
    \end{tabular}
    \caption{Relative difference with respect to our baseline ($\relu$ followed by a $\maxpool$). Architecture used is a CNN with two layers. ADAM optimiser with an early stopping with patience of ten iterations. Only Random Horizontal Flip has been used as image augmentation technique for CIFARs. The results are the average over three repetitions of the experiments.}
    \label{tab:ALL100Morpho}
\end{table}

\section{Conclusions and Perspectives}
\label{sec:Conclusions}

To the best of our knowledge, this is the first work where nonlinear activation functions in deep learning are formulated and learnt as max-plus affine functions or tropical polynomials. We have also introduced an algebraic framework inspired from mathematical morphology which provides a general representation to integrate the nonlinear activation and pooling functions.  

Besides more extended experiments on the performance on advanced DCNN networks, our next step will be to study the expressivity power of the networks based on our morphological activation functions. The universal approximation theorems for ReLU networks would just be a particular case. We conjecture that the number of parameters we are adding on the morphological activation can provide a benefit to get more efficient approximations of any function with the same width and depth.

\subsubsection*{Acknowledgements}
This work has been supported by Fondation Mathématique  Jacques Hadamard (FMJH) under the PGMO-IRSDI 2019 program.
This work was granted access to the Jean Zay supercomputer under the allocation 2021-AD011012212R1.
\bibliographystyle{splncs}
\bibliography{bibliography}

	


\end{document}